%% file: nips-grpo-dynamics.tex
\documentclass{article}
\usepackage[preprint]{neurips_2026}

\usepackage[utf8]{inputenc}
\usepackage[T1]{fontenc}
\usepackage{hyperref}
\usepackage{url}
\usepackage{booktabs}
\usepackage{amsfonts}
\usepackage{amsmath}
\usepackage{amssymb}
\usepackage{nicefrac}
\usepackage{microtype}
\usepackage{xcolor}
\usepackage{graphicx}
\usepackage{algorithm}
\usepackage{algorithmic}
\usepackage{cleveref}
\usepackage{amsthm}

\newtheorem{theorem}{Theorem}
\newtheorem{proposition}[theorem]{Proposition}
\newtheorem{corollary}[theorem]{Corollary}
\newtheorem{definition}[theorem]{Definition}

\input{math_commands.tex}

\title{Gradient Starvation in Binary-Reward GRPO:\\Why Group-Mean Centering Fails and Why the Simplest Fix Works}

\author{%
\textbf{Wenhua Nie \quad Jianan Wu \quad Junlin Liu \quad Ziwei Li \quad Zheng Lin}\\
\textbf{Zhang Zijian \quad Yilong Fan \quad Haoran Zheng \quad Jyh-Shing Roger Jang}\\[3pt]
\normalfont Correspondence: Wenhua Nie, National Taiwan University\\
\normalfont \texttt{d13944014@ntu.edu.tw}
}

\begin{document}

\maketitle

\input{sections/0_abstract.tex}
\input{sections/1_introduction.tex}
\input{sections/2_related_work.tex}
\input{sections/3_method.tex}
\input{sections/4_experiments.tex}
\input{sections/5_analysis.tex}
\input{sections/6_conclusion.tex}

\clearpage
\bibliographystyle{plainnat}
\bibliography{references}

\newpage
\appendix
\input{sections/A_appendix.tex}

\end{document}

%% file: math_commands.tex
\newcommand{\E}{\mathbb{E}}
\newcommand{\Var}{\mathrm{Var}}

\newcommand{\px}{p_x}
\newcommand{\passk}{\mathrm{pass@}k}
\newcommand{\passG}{\mathrm{pass@}G}
\newcommand{\Dreal}{D_{\mathrm{real}}}
\newcommand{\Diid}{D_{\mathrm{iid}}}

%% file: sections/0_abstract.tex
\begin{abstract}
Group Relative Policy Optimization (GRPO) is a standard algorithm for reinforcement learning from verifiable rewards, but its group-mean-centered advantage can fail under binary rewards.
The failure mode is \emph{gradient starvation}: when every response in a group is correct or every response is wrong, the centered advantage is exactly zero and the policy receives no learning signal.
We prove that the true degeneracy rate always exceeds the i.i.d. Bernoulli prediction by Jensen's inequality, and observe a 0.69 degeneracy rate at group size four in logged Qwen3.5-9B GSM8K training.
We then show that the fixed-reference Sign advantage, $A=2r-1$, performs pass@$G$ failure descent by increasing the probability that at least one sample in the group succeeds.
On the full GSM8K test set across seven seeds, Sign reaches 73.8\% accuracy versus 28.4\% for standard normalized group-mean DrGRPO at group size four, a 45.4 point gain with $p<0.0001$.
The effect is directionally consistent on Llama-3.1-8B and positive but underpowered on a MATH-500 transfer check.
Pass@$k$ analysis indicates that the main benefit is search compression rather than large capacity expansion, aligning the empirical gains with recent RLVR ceiling observations.
\end{abstract}

%% file: sections/1_introduction.tex
\section{Introduction}
\label{sec:intro}

GRPO~\citep{shao2024deepseekmath} enables mathematical reasoning post-training by generating multiple responses per prompt, scoring them with a verifiable reward, and computing a group-relative advantage for policy gradient updates.
Under binary verifiable rewards $r \in \{0,1\}$ (correct or incorrect), however, a fundamental failure mode emerges: \emph{gradient starvation}.

When all $G$ responses in a group receive the same binary reward---either all correct or all incorrect---the group-mean advantage $A_j = r_j - \bar{r}$ is exactly zero for every response.
The policy gradient contribution from such \emph{degenerate groups} vanishes entirely.
For a model with per-prompt accuracy $\px$ at small $G$, the probability of a degenerate group is $\px^G + (1{-}\px)^G$, which is large whenever $\px$ is close to 0 or 1.
We show that the observed degeneracy rate reaches $0.69$ at $G{=}4$ for Qwen3.5-9B on GSM8K in 800 logged groups from a seed-42 DrGRPO trajectory---meaning that over two-thirds of logged groups produce zero gradient under standard centering, and learning depends on the remaining minority of mixed groups.
This failure is group-size dependent rather than absolute: when we increase to $G{=}8$, DrGRPO recovers to 81.7\% accuracy over seven seeds, but matched Sign runs still reach 85.8\% over five seeds, indicating that larger groups reduce starvation while fixed-reference advantages can remain substantially more effective.

Prior GRPO variants address related but distinct issues.
Dr.GRPO~\citep{liu2025drgrpo} removes length bias via per-token normalization.
DAPO~\citep{yu2025dapo} uses dynamic sampling to avoid degenerate groups at generation time.
Recent work has also noted that low within-group reward variation can erase GRPO learning signal~\citep{xu2025spo,zhong2026rcgrpo}.
What remains missing is a per-prompt quantification of this effect for binary-reward RLVR, a formal connection to pass@$G$, and a direct comparison of fixed-reference advantage choices.
Large groups and dynamic sampling are valid ways to reduce the number of degenerate groups, but they do not remove the advantage-level mechanism.
They instead provide a useful prediction to test: if gradient starvation is the cause, then increasing $G$ should rescue group-mean centering, while fixed-reference advantages should remain competitive at matched $G$ because they retain signal on the residual degenerate groups.

Our key insight is that the fix for gradient starvation is remarkably simple: replace the group-mean reference with a fixed threshold.
The Sign advantage $A_j = 2r_j - 1$ assigns $+1$ to correct and $-1$ to incorrect responses regardless of group composition, providing full-strength gradient signal in every group including degenerate ones.
We prove that this choice is not merely a heuristic: the expected gradient from all-fail groups under Sign advantage is proportional to $\nabla(1{-}p)^G$, performing gradient ascent on $\passG = 1 - (1{-}p)^G$ (\Cref{thm:passg}).
Our contribution is not the fix itself---which is trivially simple---but the diagnosis of \emph{why} standard GRPO loses signal under binary rewards, the formal connection between degenerate-group signal and pass@$k$ expansion, and the empirical demonstration that coverage of degenerate groups (not expected gradient magnitude) determines learning outcomes.

\paragraph{Contributions.}
\begin{enumerate}
    \item \textbf{Gradient starvation diagnosis.} We provide the first per-prompt quantification of degeneracy in binary-reward GRPO, proving via Jensen's inequality that $\Dreal \geq \Diid$, measuring $\Dreal = 0.90$ at $G{=}2$, and observing a $0.69$ degeneracy rate at $G{=}4$ in 800 logged groups from a seed-42 DrGRPO trajectory (\Cref{sec:starvation}).
    \item \textbf{Pass@$G$ failure descent theorem.} We prove that all-fail groups under fixed-reference advantage produce a gradient proportional to $\nabla(1{-}p)^G$, directly increasing $\passk$ for every $k \geq 1$ (\Cref{thm:passg}).
    \item \textbf{Empirical validation and larger-group control.} We compare six advantage formulations across 7 seeds on GSM8K ($n{=}1319$), finding Sign achieves 73.8\% versus DrGRPO 28.4\% at $G{=}4$ ($+45.4$pp, $p < 0.0001$). A matched larger-group control shows DrGRPO improves to 81.7\% at $G{=}8$ while matched Sign runs reach 85.8\%, confirming the predicted group-size transition and sharpening the claim to small-group starvation and same-$G$ advantage design. The effect is directionally consistent across Llama-3.1-8B (Sign $+11.3$pp over base), with gains largest for moderately weak models above a minimal capability threshold; a single $G{=}4$ MATH-500 transfer check is positive but underpowered (\Cref{sec:experiments}).
\end{enumerate}

\begin{figure}[t]
    \centering
    \includegraphics[width=\textwidth]{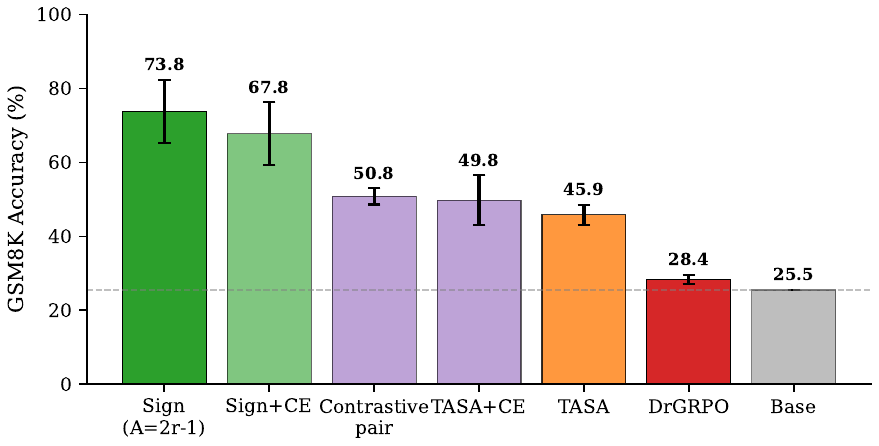}
    \caption{GSM8K accuracy (\%) for six advantage formulations under binary-reward GRPO (Qwen3.5-9B, $G{=}4$, 200 steps, 7 seeds). Std-normalized group-mean centering (DrGRPO) stays near base ($+$2.9pp), while the simplest fixed-reference advantage $A{=}2r{-}1$ (Sign) recovers $+$45.4pp. Error bars: standard deviation.}
    \label{fig:hero}
\end{figure}

%% file: sections/2_related_work.tex
\section{Related Work}
\label{sec:related}

\paragraph{GRPO and variants for RLVR.}
GRPO~\citep{shao2024deepseekmath} introduced group-relative advantage estimation for reinforcement learning from verifiable rewards, eliminating the need for a learned value function by using outcome-based binary verification.
Dr.GRPO~\citep{liu2025drgrpo} addressed length bias by removing per-sequence length normalization and using per-token KL penalties.
DAPO~\citep{yu2025dapo} proposed dynamic sampling to filter degenerate groups at generation time, clip-higher to allow larger policy updates, and overlong reward shaping.
SPO~\citep{xu2025spo} explicitly identifies degenerate groups as wasted compute and replaces per-group baselines with a persistent value tracker; RC-GRPO studies reward-conditioned rollouts in multi-turn tool-calling agents~\citep{zhong2026rcgrpo}.
These variants show that vanishing group-relative signal is an active concern.
Our focus is complementary: we quantify per-prompt binary-reward degeneracy via Jensen's inequality, prove its pass@$G$ connection, and isolate the advantage-level coverage difference between Sign, TASA, and std-normalized DrGRPO.

\paragraph{Advantage normalization in policy gradient methods.}
The choice of advantage baseline profoundly affects policy gradient estimator variance~\citep{schulman2016gae,williams1992reinforce}.
Classical REINFORCE uses a state-dependent baseline; PPO~\citep{schulman2017ppo} uses a learned value function.
GRPO's group-mean centering is a sample-based analog: the baseline is the within-group reward mean.
Under binary rewards, this baseline coincides with the empirical group accuracy, which is 0 or 1 in degenerate groups, producing zero advantage.
The Sign advantage $A = 2r - 1$ is equivalent to using a fixed baseline of $b = 0.5$; to our knowledge, this fixed-reference choice has not been analyzed as a degeneracy-coverage mechanism in binary-reward RLVR.
We note explicitly that Sign is not a novel algorithm---it is the simplest possible fixed-reference advantage.
Our contribution is instead the analysis: quantifying how group-mean centering loses signal under binary rewards (0.69 observed degeneracy at $G{=}4$ in the main DrGRPO trajectory), proving its connection to pass@$G$, and showing that degenerate-group \emph{coverage} rather than expected gradient magnitude determines learning.
DAPO's dynamic sampling addresses degeneracy at generation time by resampling until mixed groups are obtained; our analysis explains \emph{why} this helps (degenerate groups carry zero gradient under centering) and proposes a complementary training-time approach.
Because Sign changes only the advantage reference, it adds no extra generations and is orthogonal to generation-time filtering.
A direct empirical comparison between the two strategies (generation-time filtering vs.\ training-time fixed reference) remains important future work.

\paragraph{RLVR capacity and search compression.}
Recent work by~\citet{bpr2025} (``Does RL Really Incentivize Reasoning Capacity?'') demonstrated that RLVR primarily acts as search compression: pass@1 improves while pass@$k$ remains bounded by the base model's latent capacity.
Our pass@$k$ analysis at $k{=}128$ is consistent with this finding ($\Delta$pass@1 $= +11.9$pp compression, $\Delta$pass@100 $= +2.2$pp expansion), and our pass@$G$ failure descent theorem provides a mechanistic explanation for why RLVR can nonetheless produce small capacity expansion through the degenerate-group gradient channel.

\paragraph{Replay and experience management in RLVR.}
ExGRPO organizes and prioritizes useful past experiences for reasoning-oriented RL training~\citep{exgrpo2025}.
Our ablation shows that CE replay does not improve Sign advantage (67.8\% vs.\ 73.8\%, $p = 0.249$), suggesting that when the advantage signal is already strong, replay adds no benefit and may introduce noise from stale trajectories.

%% file: sections/3_method.tex
\section{Gradient Starvation and Its Cure}
\label{sec:method}

\subsection{Gradient Starvation in Binary-Reward GRPO}
\label{sec:starvation}

Consider GRPO with binary reward $r \in \{0,1\}$, group size $G$, and per-prompt policy success probability $\px = \pi_\theta(C_x \mid x)$ where $C_x$ is the set of correct completions for prompt $x$.
The standard group-mean advantage for response $j$ in a group is:
\begin{equation}
    A_j = r_j - \frac{1}{G}\sum_{i=1}^G r_i = r_j - \frac{n_+}{G},
    \label{eq:drgrpo_advantage}
\end{equation}
where $n_+ = \sum_i r_i$ is the number of correct responses.
Our implemented DrGRPO baseline further divides this centered advantage by the within-group reward standard deviation on mixed groups; the degeneracy argument is unchanged because all-fail and all-pass groups still have zero centered numerator.

\begin{definition}[Degenerate group]
A group is \emph{degenerate} if $n_+ = 0$ (all incorrect) or $n_+ = G$ (all correct). In a degenerate group, $A_j = 0$ for all $j$, and the policy gradient contribution is exactly zero.
\end{definition}

For a prompt with success probability $\px$, the degeneracy probability is:
\begin{equation}
    D(\px, G) = \px^G + (1 - \px)^G.
    \label{eq:degeneracy}
\end{equation}

The \emph{real} degeneracy rate across the training set is $\Dreal = \E_x[D(\px, G)]$.

\begin{proposition}[Jensen bound on gradient starvation]
\label{prop:jensen}
Assume the $G$ responses per prompt are sampled i.i.d.\ from $\pi_\theta(\cdot \mid x)$, so that each binary reward $r_i \sim \mathrm{Bernoulli}(\px)$ independently.
Since $D(p, G) = p^G + (1{-}p)^G$ is convex for $G \geq 2$ (its second derivative $G(G{-}1)[p^{G-2} + (1{-}p)^{G-2}] \geq 0$), Jensen's inequality gives:
\begin{equation}
    \Dreal = \E_x[D(\px, G)] \geq D(\bar{p}, G) = \bar{p}^G + (1 - \bar{p})^G = \Diid,
    \label{eq:jensen}
\end{equation}
where $\bar{p} = \E_x[\px]$ is the mean success probability. Furthermore, by second-order Taylor expansion:
\begin{equation}
    \Dreal \geq \Diid + \frac{G(G{-}1)}{2}\Var(\px) \cdot \min_{p \in [0,1]}[p^{G-2} + (1{-}p)^{G-2}].
    \label{eq:variance_bound}
\end{equation}
\end{proposition}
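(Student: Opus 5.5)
The plan has three parts: establish the convexity of $D(\cdot,G)$, apply Jensen's inequality to get \eqref{eq:jensen}, and then sharpen it with a Taylor expansion with integral (or Lagrange) remainder to get \eqref{eq:variance_bound}.

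\textbf{Convexity and the Jensen step.} First I check that the i.i.d.\ Bernoulli assumption gives $\Pr[n_+=0]+\Pr[n_+=G]=(1-\px)^G+\px^G$ for each fixed prompt $x$. Conditioning on $x$ and then averaging over the prompt distribution therefore yields $\Dreal=\E_x[D(\px,G)]$. Next I differentiate twice:
\[
\partial_p^2 D(p,G)=G(G-1)\bigl[p^{G-2}+(1-p)^{G-2}\bigr]\ge 0 \quad\text{on } [0,1] \text{ for } G\ge 2 .
\]
So $D(\cdot,G)$ is convex on $[0,1]$. Since $\px\in[0,1]$ is a bounded random variable with mean $\bar p\in[0,1]$, Jensen's inequality gives $\E_x[D(\px,G)]\ge D(\bar p,G)$, which is \eqref{eq:jensen}.

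\textbf{Quantitative refinement.} For each $x$ I expand $D(\px,G)$ around $\bar p$ with a Lagrange remainder: for some $\xi_x$ between $\px$ and $\bar p$,
\[
D(\px,G)=D(\bar p,G)+D'(\bar p,G)(\px-\bar p)+\tfrac12 D''(\xi_x,G)(\px-\bar p)^2 .
\]
Since $\xi_x\in[0,1]$, I lower-bound
\[
D''(\xi_x,G)\ge G(G-1)\,m_G, \qquad m_G:=\min_{p\in[0,1]}\bigl[p^{G-2}+(1-p)^{G-2}\bigr].
\]
Taking $\E_x$ kills the linear term because $\E_x[\px-\bar p]=0$. What remains is $\Dreal\ge \Diid+\tfrac{G(G-1)}{2}m_G\,\E_x[(\px-\bar p)^2]$, which is \eqref{eq:variance_bound} with $\E_x[(\px-\bar p)^2]=\Var(\px)$. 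A measurability caveat about $\xi_x$ can be avoided by using the integral form of the remainder,
\[
\int_0^1(1-t)\,D''\bigl(\bar p+t(\px-\bar p),G\bigr)\,dt\,(\px-\bar p)^2 ,
\]
and bounding the integrand pointwise by $G(G-1)m_G$.

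\textbf{Main obstacle.} There is no real difficulty; the only step needing care is confirming that $m_G$ is a genuine, finite, positive constant, so that the bound is meaningful. For $G=2$ the bracket equals $2$, so $m_2=2$. For $G\ge 3$, the function $p^{G-2}+(1-p)^{G-2}$ is convex and symmetric about $1/2$, so its minimum is attained at $p=1/2$, giving $m_G=2^{3-G}>0$. I would state this explicitly (it also makes the bound $\Dreal\ge\Diid+G(G-1)2^{2-G}\Var(\px)$ concrete) and note that equality in Jensen holds iff $\px$ is a.s.\ constant, since $D$ is strictly convex when $m_G>0$.
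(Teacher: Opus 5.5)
Your proposal is correct and follows the same route as the paper: convexity of $D(\cdot,G)$ from its second derivative, Jensen's inequality for the first bound, and a second-order Taylor expansion about $\bar p$, with the curvature lower-bounded by $G(G-1)\min_{p\in[0,1]}[p^{G-2}+(1-p)^{G-2}]$, for the variance bound. Your integral-form remainder and the explicit constant $\min_{p\in[0,1]}[p^{G-2}+(1-p)^{G-2}]=2^{3-G}$ (which also covers $G=2$) spell out details the paper leaves implicit.
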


\textbf{Empirical measurement.} For Qwen3.5-9B on GSM8K at $G{=}2$, we measure per-prompt $\px$ from training rollouts and find $\Dreal = 0.90$, while $\Diid = 0.56$ (using $\bar{p} = 0.325$). The Jensen gap is $+60\%$, driven by the highly bimodal distribution of $\px$: 57.5\% of prompts have $\px = 0$ (model never generates correct answers) and 22.5\% have $\px = 1$ (always correct).
At the experimental setting of $G{=}4$, the homogeneous comparator based on the base greedy accuracy rounded to $\bar{p}\approx0.25$ is $\Diid(0.25, 4) = 0.32$, but observed training degeneracy is substantially higher due to the same per-prompt heterogeneity.
In a seed-42 DrGRPO run, the trainer logs 800 groups across 200 optimizer steps: $69.25\%$ are degenerate ($54.75\%$ all-fail, $14.50\%$ all-pass), with early-training degeneracy often exceeding $75\%$ before the model improves (\Cref{sec:experiments}).
We use this as an empirical trajectory-level estimate of the training regime rather than a seed-averaged estimate of $\Dreal$.
Under group-mean centering, these degenerate groups produce \emph{zero gradient}, and learning depends entirely on the minority of mixed groups.

\subsection{Fixed-Reference Advantage and Pass@$G$ Failure Descent}
\label{sec:passg}

The Sign advantage replaces the group-mean reference with a fixed threshold:
\begin{equation}
    A_j^{\mathrm{Sign}} = 2r_j - 1 = \begin{cases} +1 & \text{if } r_j = 1 \text{ (correct)} \\ -1 & \text{if } r_j = 0 \text{ (incorrect)} \end{cases}.
    \label{eq:sign}
\end{equation}

This provides non-zero gradient in \emph{every} group, including degenerate ones: all-correct groups receive $A = +1$ (reinforcement) and all-incorrect groups receive $A = -1$ (avoidance).

\begin{theorem}[Pass@$G$ failure descent]
\label{thm:passg}
Let $p = \pi_\theta(C_x) \in (0,1)$, $q = 1 - p$, and $s_\theta(y) = \nabla_\theta \log \pi_\theta(y \mid x)$ be the score function.
Consider a fixed-reference advantage with $c > 0$ where incorrect responses receive $A = -c$ and correct responses receive $A = +c$.
Assume the $G$ responses are sampled i.i.d.\ from $\pi_\theta(\cdot \mid x)$ and that $\E_{y \sim \pi_\theta}[\|s_\theta(y)\|]$ is finite (regularity).
The on-policy policy gradient loss is $L = -\frac{1}{G}\sum_i A_i \log \pi_\theta(Y_i \mid x)$.
In an all-fail group ($N = 0$), every $A_i = -c$, so $L_0 = (c/G)\sum_i \log \pi_\theta(Y_i \mid x)$.
The expected gradient from all-fail groups satisfies:
\begin{equation}
    \E\left[\nabla_\theta L_0 \cdot \mathbf{1}\{N = 0\}\right] = -c \cdot q^{G-1} \nabla_\theta p = \frac{c}{G} \nabla_\theta q^G,
    \label{eq:failure_descent}
\end{equation}
where $N = \sum_{i=1}^G r_i$. Gradient descent on this term performs gradient ascent on $\passG = 1 - q^G$, locally increasing the probability that at least one of $G$ samples succeeds.
\end{theorem}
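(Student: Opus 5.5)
The plan is a direct computation. It rests on two facts: the all-fail indicator factorizes over i.i.d.\ samples, and the restricted score expectation $\E[s_\theta(Y)\mathbf{1}\{Y \notin C_x\}]$ is the gradient of the failure probability $q$. Throughout, $\nabla_\theta L_0$ is the gradient of the surrogate loss with the sampled responses $Y_1,\dots,Y_G$ held fixed; this is the REINFORCE convention the statement uses. With this convention,
\begin{equation*}
\nabla_\theta L_0 = \frac{c}{G}\sum_{i=1}^G s_\theta(Y_i), \qquad \mathbf{1}\{N=0\} = \prod_{j=1}^G \mathbf{1}\{Y_j \notin C_x\}.
\end{equation*}

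First, I would expand the expectation term by term in $i$. For a fixed $i$, independence of the $Y_j$ splits $\E\bigl[s_\theta(Y_i)\prod_j \mathbf{1}\{Y_j\notin C_x\}\bigr]$ into $\E\bigl[s_\theta(Y_i)\mathbf{1}\{Y_i\notin C_x\}\bigr]\cdot\prod_{j\neq i}\Pr(Y_j\notin C_x)$. The product equals $q^{G-1}$. By exchangeability all $G$ summands are equal, so the total is $c\,q^{G-1}\,\E\bigl[s_\theta(Y)\mathbf{1}\{Y\notin C_x\}\bigr]$. The integrability needed to split the expectation is exactly the assumed finiteness of $\E\|s_\theta(Y)\|$, since the indicators are bounded.

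Second, I would evaluate the restricted score expectation with the log-derivative trick:
\begin{equation*}
\E\bigl[s_\theta(Y)\mathbf{1}\{Y\notin C_x\}\bigr] = \sum_{y\notin C_x}\pi_\theta(y\mid x)\nabla_\theta\log\pi_\theta(y\mid x) = \nabla_\theta\sum_{y\notin C_x}\pi_\theta(y\mid x) = \nabla_\theta q = -\nabla_\theta p.
\end{equation*}
This gives the first equality, $-c\,q^{G-1}\nabla_\theta p$. The second equality is the chain rule: $\nabla_\theta q^G = G q^{G-1}\nabla_\theta q = -G q^{G-1}\nabla_\theta p$, so $\frac{c}{G}\nabla_\theta q^G = -c\,q^{G-1}\nabla_\theta p$.

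For the interpretation, I would note that a descent step $\theta \leftarrow \theta - \eta\,\frac{c}{G}\nabla_\theta q^G$ changes $q^G$ to first order by $-\eta\frac{c}{G}\|\nabla_\theta q^G\|^2 \le 0$. Hence $\passG = 1-q^G$ is non-decreasing to first order, and strictly increases whenever $\nabla_\theta p\neq 0$ and $q>0$; the latter holds because $p\in(0,1)$. Since $c>0$ and $q^{G-1}>0$, the expected all-fail contribution is a positive multiple of $-\nabla_\theta p$, so descent moves along $+\nabla_\theta p$. That same direction increases $1-q^k$ for every $k$.

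The main obstacle is the interchange of $\nabla_\theta$ with the sum over $y\notin C_x$ in the second step. If completions have bounded length over a finite vocabulary, the sum is finite and the interchange is trivial; I would state this case first. Otherwise I would justify the interchange by dominated convergence, using a local uniform bound on $\|\nabla_\theta\pi_\theta(y\mid x)\|$ in a neighbourhood of $\theta$. The stated assumption on $\E\|s_\theta\|$ alone does not supply such a bound, so this step needs that slightly stronger, standard regularity condition.
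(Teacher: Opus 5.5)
Your proposal is correct and does essentially the same computation as the paper. The paper conditions on $N=0$, uses $\E[s_\theta(Y)\mid Y\in W_x]=\nabla_\theta\log q$, and multiplies by $\Pr(N=0)=q^G$; you instead factor $\mathbf{1}\{N=0\}$ by independence and compute $\E[s_\theta(Y)\mathbf{1}\{Y\notin C_x\}]=\nabla_\theta q$ directly, which is equivalent bookkeeping. Your caveat about exchanging $\nabla_\theta$ with the sum over incorrect completions is a fair extra point: the paper makes this interchange without comment, which is harmless in its length-capped, finite-vocabulary setting.
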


\begin{proof}[Proof sketch]
Conditioned on $N = 0$, each sample $Y_i \sim \pi_\theta(\cdot \mid Y \in W_x)$ where $W_x = C_x^c$.
The expected score is $\E[s_\theta(Y) \mid Y \in W_x] = \nabla_\theta \log q$.
With $A_i = -c$ for all $i$ and $\Pr(N = 0) = q^G$:
$\E[\nabla_\theta L_0 \cdot \mathbf{1}\{N{=}0\}] = c \cdot q^G \cdot \nabla_\theta \log q = c \cdot q^{G-1} \cdot (-\nabla_\theta p)$.
Since $\nabla_\theta q^G = G q^{G-1} \nabla_\theta q = -G q^{G-1} \nabla_\theta p$, the result follows.
The full proof is provided in the appendix.
\end{proof}

For Sign advantage ($c = 1$), the all-fail gradient magnitude is $q^{G-1}|\nabla_\theta p|$---full strength.

\begin{corollary}
\label{cor:passk}
If $0 < p < 1$ and $\nabla_\theta p \neq 0$, the all-fail Sign gradient locally increases $\passk$ for every $k \geq 1$.
\end{corollary}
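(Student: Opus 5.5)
The plan is to reduce everything to a first-order computation along the descent direction supplied by \Cref{thm:passg}. Write $g_0 := \E[\nabla_\theta L_0 \cdot \mathbf{1}\{N=0\}]$ for the expected all-fail Sign gradient ($c=1$). By \Cref{thm:passg}, $g_0 = -q^{G-1}\nabla_\theta p$, so a gradient-descent step of size $\eta>0$ moves the parameters by
\begin{equation*}
\Delta\theta = -\eta\, g_0 = \eta\, q^{G-1}\nabla_\theta p .
\end{equation*}
For any $k\ge 1$ define $\passk(\theta) = 1 - q^k = 1-(1-p)^k$, with i.i.d.\ sampling as in the theorem's hypotheses. The chain rule gives
\begin{equation*}
\nabla_\theta \passk = k(1-p)^{k-1}\nabla_\theta p = k q^{k-1}\nabla_\theta p .
\end{equation*}
The key observation is that $\nabla_\theta \passk$ is parallel to $\nabla_\theta p$ for every $k$. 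The all-fail direction $\nabla_\theta p$ is therefore simultaneously an ascent direction for every $\passk$, not only for $\passG$.

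Next I would take the first-order Taylor expansion of $\passk$ along $\Delta\theta$:
\begin{equation*}
\passk(\theta+\Delta\theta) - \passk(\theta) = \eta\, k\, q^{k-1} q^{G-1}\|\nabla_\theta p\|^2 + o(\eta).
\end{equation*}
Since $0<p<1$, we have $q\in(0,1)$, so $q^{k-1}>0$ and $q^{G-1}>0$. Combined with $\nabla_\theta p\neq 0$, the leading coefficient $k q^{k+G-2}\|\nabla_\theta p\|^2$ is strictly positive. Hence for all sufficiently small $\eta>0$ the increment is positive, which is exactly what ``locally increases'' asserts.

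The main obstacle is regularity: the $o(\eta)$ remainder requires $p$ to be differentiable in $\theta$, ideally $C^1$ near $\theta$. The integrability assumption of \Cref{thm:passg} is what justifies interchanging $\nabla_\theta$ with the expectation defining $p=\pi_\theta(C_x)$. I would first invoke that assumption to obtain differentiability, then note that $p\mapsto 1-(1-p)^k$ is smooth. Differentiability of $\passk\circ\theta$ then yields the expansion.

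A secondary point is scope. The claim concerns only the all-fail component of the gradient, not the full Sign update. I would state explicitly that the conclusion applies to a step taken along that component alone, or equivalently that this component's directional derivative of $\passk$ is positive. I would not claim anything about the mixed-group or all-pass contributions.
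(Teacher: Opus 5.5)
Your proof is correct and follows the paper's route. The appendix simply notes that $\frac{d}{dp}\passk = k(1-p)^{k-1} > 0$ on $(0,1)$, so any step that raises $p$ raises every $\passk$. Your chain-rule identity $\nabla_\theta \passk = k q^{k-1}\nabla_\theta p$, combined with the first-order expansion along $\Delta\theta = \eta q^{G-1}\nabla_\theta p$, makes explicit the step the paper leaves implicit: the all-fail direction raises $p$ at first-order rate $\eta q^{G-1}\|\nabla_\theta p\|^2 > 0$.

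On regularity, you only need differentiability of $p$ at $\theta$, which the hypothesis $\nabla_\theta p \neq 0$ already presupposes; the chain rule and the $o(\eta)$ expansion require nothing more. So you do not have to derive it from the score-integrability assumption. That assumption could not do the job anyway, since a bound at a single $\theta$ does not by itself justify exchanging the gradient and the sum.
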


\subsection{Why Coverage, Not Magnitude, Determines Performance}
\label{sec:coverage}

The TASA baseline used below is the threshold-anchored signed advantage with threshold $c=1/2$:
\begin{equation}
    A_i^{\mathrm{TASA}} =
    \frac{(r_i-c)_+}{\sum_k (r_k-c)_+}\mathbf{1}\left\{\sum_k (r_k-c)_+>0\right\}
    -
    \frac{(c-r_i)_+}{\sum_k (c-r_k)_+}\mathbf{1}\left\{\sum_k (c-r_k)_+>0\right\}.
    \label{eq:tasa}
\end{equation}
For binary rewards, mixed groups assign $+1/n_+$ to correct responses and $-1/(G-n_+)$ to incorrect responses; all-fail groups assign $-1/G$ to every response and all-pass groups assign $+1/G$.

The contrastive-pair replay baseline keeps the same on-policy signed advantage but adds a prompt-local logistic replay loss over archived verified positive-negative completion pairs (formalized in \Cref{app:details}); it tests whether replayed pair evidence can replace or augment degenerate-group gradient coverage.

A natural question is whether the ranking Sign $\gg$ TASA $\gg$ DrGRPO is explained by expected gradient magnitude.
It is not.
At $p \approx 0.25$ and $G = 4$, the expected policy gradient coefficients are: Sign $= 2.0|\nabla p|$, std-normalized DrGRPO $= 1.425|\nabla p|$, TASA $= 1.016|\nabla p|$.
DrGRPO has \emph{higher} expected gradient than TASA, yet TASA empirically outperforms DrGRPO (45.9\% vs.\ 28.4\%).

The resolution is that expected magnitude averages over all group compositions, while \emph{degenerate-group coverage} determines actual learning.
With the observed $0.69$ degeneracy rate at $G{=}4$ (from the seed-42 DrGRPO training log), degenerate groups dominate:
\begin{itemize}
    \item \textbf{Sign}: all-fail groups produce gradient $\pm 1$ per response---full coverage.
    \item \textbf{TASA}: all-fail groups produce gradient $\pm 1/G$---attenuated but non-zero.
    \item \textbf{DrGRPO}: all-fail groups produce gradient $0$---zero coverage.
\end{itemize}
When the majority of groups are degenerate, the non-degenerate expected magnitude is irrelevant; what matters is whether the method extracts \emph{any} signal from degenerate groups.

\paragraph{Quantifying the degenerate-group contribution.}
From \Cref{thm:passg}, the expected gradient contribution from all-fail groups under a method with degenerate-group advantage magnitude $a$ is $-a \cdot q^{G-1} \nabla_\theta p$; similarly, all-pass groups contribute $-a \cdot p^{G-1} \nabla_\theta p$.
The total degenerate-group gradient is thus proportional to $(q^{G-1} + p^{G-1}) \|\nabla_\theta p\|$.
For DrGRPO, $a = 0$ and this contribution vanishes entirely.
For Sign, $a = 1$; for TASA, $a = 1/G$.
Since degenerate groups constitute $69\%$ of logged groups in that $G{=}4$ trajectory and the model spends most of training with $p$ near 0 (where $q^{G-1} \approx 1$), any method with $a = 0$ loses the majority of available gradient signal regardless of its behavior in non-degenerate groups.

%% file: sections/4_experiments.tex
\section{Experiments}
\label{sec:experiments}

\subsection{Setup}

We train Qwen3.5-9B~\citep{qwen35modelcard,qwen2025qwen3} with LoRA~\citep{hu2021lora} ($r{=}64$, $\alpha{=}128$) on the GSM8K~\citep{cobbe2021gsm8k} training set (7,473 problems) using binary reward verification.
All methods use $G{=}4$, learning rate $2 \times 10^{-5}$, 200 training steps, and the same TRL-based trainer implementation to ensure identical data pipeline, generation, and evaluation.
The primary variable is the advantage formulation; replay variants additionally include a CE replay loss ($\lambda_{\mathrm{CE}}{=}0.05$).
We evaluate on the full GSM8K test set ($n{=}1{,}319$) with greedy decoding and report results across 7 random seeds (42--48).

\subsection{Main Results}

\Cref{tab:main} presents accuracy for all six advantage formulations.
Sign advantage achieves 73.8\%, a $+$45.4pp improvement over DrGRPO ($p < 0.0001$, Welch's $t$-test) and $+$48.3pp over the base model.
DrGRPO barely learns ($+$2.9pp over base), consistent with severe gradient starvation in this small-$G$, finite-budget setting.
Although Sign has high seed variance, the separation is not driven solely by an outlier: its median is 70.7\%, and its worst seed (64.6\%) still exceeds the best DrGRPO seed (30.6\%).

\begin{table}[t]
\centering
\caption{GSM8K test accuracy (\%, $n{=}1{,}319$) for different advantage formulations. All methods use Qwen3.5-9B, $G{=}4$, $\mathrm{lr}{=}2{\times}10^{-5}$, 200 steps. Mean, median, and population std over 7 seeds (sample std is $\approx$8\% larger at $n{=}7$). $^\ast$: Welch's $t$-test vs.\ DrGRPO.}
\label{tab:main}
\begin{tabular}{lccccl}
\toprule
Method & $N$ & Mean (Med.) & Std & Range & $p$-value$^\ast$ \\
\midrule
Sign ($A{=}2r{-}1$) & 7 & \textbf{73.8} (70.7) & 8.6 & 64.6--90.1 & $<$0.0001 \\
Sign + CE replay & 7 & 67.8 (68.5) & 8.5 & 54.8--80.9 & $<$0.0001 \\
Contrastive pair & 7 & 50.8 (51.2) & 2.2 & 47.1--54.4 & $<$0.0001 \\
TASA + CE replay & 7 & 49.8 (48.4) & 6.8 & 42.2--62.5 & $<$0.001 \\
TASA (threshold anchored) & 7 & 45.9 (45.8) & 2.7 & 41.2--50.9 & $<$0.0001 \\
DrGRPO (std-norm group mean) & 7 & 28.4 (28.4) & 1.2 & 26.5--30.6 & baseline \\
\midrule
Base (no training) & -- & 25.5 & -- & -- & -- \\
\bottomrule
\end{tabular}
\end{table}

The ranking Sign $\gg$ TASA $\gg$ DrGRPO directly matches the degenerate-group coverage prediction from \Cref{sec:coverage}: Sign provides $\pm 1$ signal in all groups, TASA provides $\pm 1/G$ in degenerate groups, contrastive-pair replay provides only archived pair signal, and DrGRPO provides zero.

\subsection{Matched Larger-Group Control}
\label{sec:g8control}

The $G{=}4$ comparison raises an important alternative explanation: perhaps group-mean centering only fails because the group is too small.
We therefore ran a larger-group control at $G{=}8$ with the same model, dataset, step count, group size, and greedy GSM8K evaluation protocol.
\Cref{tab:g8control} shows that increasing $G$ substantially mitigates DrGRPO's failure: DrGRPO rises from 28.4\% at $G{=}4$ to 81.7\% at $G{=}8$ over seven seeds.
This result is positive for the gradient-starvation diagnosis because the predicted degeneracy probability $p^G + (1-p)^G$ decreases as $G$ grows for intermediate prompt success probabilities.
It also rules out an overstrong claim that group-mean centering is intrinsically unusable under binary rewards.
The practical implication is a tradeoff rather than a contradiction: one can spend more generation compute to reduce starvation, use generation-time filtering such as dynamic sampling, or change the advantage reference so that degenerate groups are no longer discarded.
Our experiments isolate the third option.

At the same time, matched Sign runs at $G{=}8$ reach 85.8\% over five seeds, $+4.1$pp over the DrGRPO $G{=}8$ mean; every Sign $G{=}8$ seed exceeds every DrGRPO $G{=}8$ seed.
The median comparison is 84.2\% versus 81.8\%, so the conclusion does not rely on the high seed-42 Sign run.
The separation remains positive under an exact label-permutation test over the 12 completed $G{=}8$ runs ($p=0.0013$).
As a same-group-size stress test of the mechanism, this result shows that larger $G$ rescues much of DrGRPO's lost signal, but fixed-reference advantage can still provide a large additional gain by retaining signal in all-fail and all-pass groups.

\begin{table}[t]
\centering
\caption{Larger-group control on GSM8K (Qwen3.5-9B, 200 steps, full test set). Mean $\pm$ population std with medians in parentheses; deltas are computed before rounding. Appendix details describe the $G{=}8$ training configuration.}
\label{tab:g8control}
\begin{tabular}{lcccc}
\toprule
Method & $G$ & $N$ & Accuracy (Med.) & Range \\
\midrule
DrGRPO (std-norm group mean) & 4 & 7 & 28.4 $\pm$ 1.2 (28.4) & 26.5--30.6 \\
Sign ($A{=}2r{-}1$) & 4 & 7 & 73.8 $\pm$ 8.6 (70.7) & 64.6--90.1 \\
DrGRPO (std-norm group mean) & 8 & 7 & 81.7 $\pm$ 0.4 (81.8) & 80.9--82.2 \\
Sign ($A{=}2r{-}1$) & 8 & 5 & 85.8 $\pm$ 4.0 (84.2) & 82.6--93.6 \\
\bottomrule
\end{tabular}
\end{table}

\subsection{Pass@$k$ Analysis}

Appendix~\ref{app:passk} reports the full pass@$k$ table and curves. The key pattern is that Sign improves pass@1 by $+$11.9pp but pass@100 by only $+$2.2pp, consistent with search compression rather than large capacity expansion~\citep{bpr2025}. TASA and Sign reach the same pass@100 despite a large greedy-accuracy gap, reinforcing that Sign mainly makes existing solution mass easier to find.

\subsection{Gradient Starvation Visualization}

\Cref{fig:starvation} visualizes the gradient starvation phenomenon at $G{=}2$.
The per-prompt success rate $\px$ at training start is highly bimodal: 57.5\% of prompts have $\px = 0$ and 22.5\% have $\px = 1$.
The measured $\Dreal = 0.90$ at $G{=}2$ exceeds $\Diid = 0.56$ by 60\%, confirming the Jensen bound (\Cref{prop:jensen}).
At the experimental setting of $G{=}4$, per-prompt heterogeneity similarly amplifies degeneracy: in the seed-42 DrGRPO trajectory, 800 logged groups across 200 optimizer steps have observed degeneracy $0.6925$ ($54.75\%$ all-fail, $14.50\%$ all-pass), versus $\Diid(0.25, 4) = 0.32$---a $2.2\times$ amplification over the homogeneous Bernoulli prediction.

\begin{figure}[t]
    \centering
    \includegraphics[width=\textwidth]{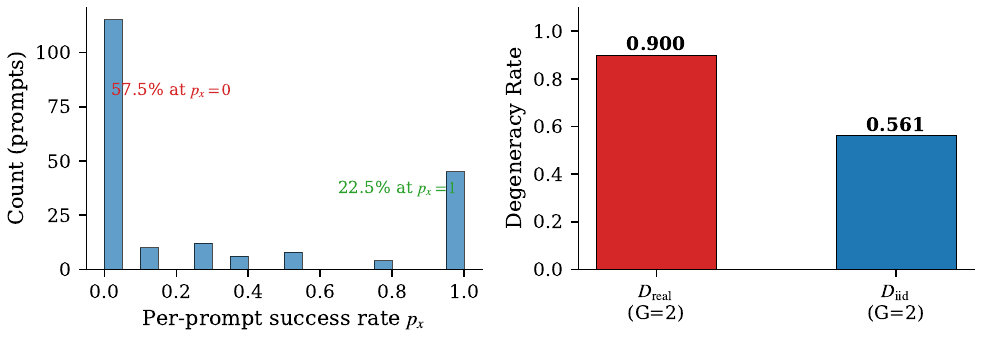}
    \caption{\textbf{Left:} Per-prompt success-rate distribution at training start ($G{=}2$), with endpoint masses measured from rollouts and intermediate bins shown for visualization. 57.5\% of prompts have $\px{=}0$. \textbf{Right:} Measured degeneracy $\Dreal{=}0.90$ vs.\ i.i.d.\ prediction $\Diid{=}0.56$, a 60\% Jensen gap caused by the bimodal $\px$ distribution.}
    \label{fig:starvation}
\end{figure}

\subsection{Exploratory Cross-Model and Cross-Family Checks}

To probe generality, we evaluate Sign on three additional models spanning two families and a wide capability range (\Cref{tab:crossmodel}).
The pattern is consistent: Sign outperforms the base model on every model where the base has nonzero success probability, and among those models the gain decreases with base strength---as predicted by gradient starvation theory.
These checks are directional (2--3 seeds for the non-main models, no DrGRPO control for Llama/Qwen3) rather than statistically validated cross-family comparisons.

On Llama-3.1-8B-Instruct (a non-Qwen model with 70.4\% base accuracy), Sign achieves 81.7\% ($+$11.3pp, 2 seeds), showing the effect is consistent across model families.
On Qwen3-8B (57.5\% base), Sign reaches 75.1\% ($+$17.6pp, 3 seeds).
On the weakest model, Qwen3.5-2B (17.4\% base), Sign shows no improvement---200 training steps are insufficient when the model almost never generates correct answers, as the all-fail gradient requires $p_x > 0$ to have a non-trivial effect.

\begin{table}[t]
\centering
\caption{Exploratory cross-model and cross-family checks (GSM8K, $G{=}4$, 200 steps). Sign advantage is directionally consistent across model families and sizes. Gains are largest for moderate-capability models; very weak models (Qwen3.5-2B) cannot benefit from 200 steps alone.}
\label{tab:crossmodel}
\begin{tabular}{llccc}
\toprule
Model & Family & Base & Sign & $\Delta$ \\
\midrule
Qwen3.5-2B & Qwen & 17.4 & 17.4 & $+$0.0 \\
Qwen3.5-9B & Qwen & 25.5 & \textbf{73.8} ($\pm$8.6) & \textbf{$+$48.3} \\
Qwen3-8B & Qwen & 57.5 & \textbf{75.1} ($\pm$4.7) & $+$17.6 \\
Llama-3.1-8B-Inst & Llama & 70.4 & \textbf{81.7} ($\pm$0.4) & $+$11.3 \\
\bottomrule
\end{tabular}
\end{table}

Above a minimal capability threshold ($\px > 0$ for enough prompts), Sign gain decreases with base strength: Qwen3.5-9B ($+$48.3pp) $>$ Qwen3-8B ($+$17.6pp) $>$ Llama ($+$11.3pp).
Below this threshold, Qwen3.5-2B (base 17.4\%, with $\px \approx 0$ for most prompts) shows no improvement, confirming that Sign's all-fail gradient requires nonzero base success to produce learning signal.

\subsection{Training Dynamics and Transfer}

Training logs match the starvation diagnosis. In the seed-42 trajectory, Sign's reward rises from 0.21 to 0.76 and the all-fail fraction falls from 0.70 to 0.10; DrGRPO stays near 0.25--0.40 reward with degeneracy around 0.69--0.72, showing no escape within 200 steps. A harder single-seed MATH-500 transfer check is directionally positive for $G{=}4$ (Sign 36.8\%, DrGRPO 31.8\%), but we treat it as exploratory because the matched $G{=}8$ MATH-500 result in the appendix is not positive.

%% file: sections/5_analysis.tex
\section{Analysis}
\label{sec:analysis}

\paragraph{Why DrGRPO fails at small group size.}
DrGRPO's weak $G{=}4$ improvement under the 200-step budget ($28.4\%$ vs.\ base $25.5\%$) is explained by gradient starvation.
With the observed $0.69$ degeneracy rate at $G{=}4$ in the seed-42 DrGRPO log, only $31\%$ of logged groups produce non-zero gradient.
The remaining mixed groups are biased toward intermediate-$\px$ prompts; hard ($\px\approx0$) and easy ($\px\approx1$) prompts receive zero centered signal.
The $G{=}8$ control refines rather than contradicts this diagnosis: DrGRPO recovers to 81.7\%, showing a group-size phase transition rather than an impossibility theorem for group-mean centering.
Fixed-reference advantages are therefore most valuable in small- and moderate-$G$ regimes where degenerate groups still consume a large fraction of training compute.
This also explains why dynamic-sampling methods are plausible: they attack the same degeneracy term at generation time, whereas Sign attacks it at the advantage level.

\paragraph{CE replay does not help Sign.}
Sign + CE replay (67.8\%) trends below Sign alone (73.8\%), though the difference is not statistically significant ($p = 0.249$, Welch's $t$-test).
Pass@$k$ analysis supports this interpretation: Sign+CE and Sign both reach pass@100 $=99.4\%$, suggesting replay adds no capacity expansion when the on-policy advantage is already full strength.
In contrast, CE replay improves TASA (49.8\% vs.\ 45.9\%), where the weaker advantage signal benefits from additional supervised signal.

\paragraph{Group size effect.}
Sign at $G{=}2$ achieves 67.1\% (3 seeds) versus 73.8\% at $G{=}4$ (7 seeds), showing that fixed-reference advantage remains useful even when degenerate groups are frequent.
Moving from $G{=}4$ to $G{=}8$ has a much larger effect on DrGRPO: its accuracy increases from 28.4\% to 81.7\%, consistent with the theoretical degeneracy term $p^G + (1-p)^G$ shrinking as $G$ grows for intermediate $\px$.
This larger-$G$ recovery strengthens the mechanistic story by showing the expected direction of the sparse group-size sweep.
Sign $G{=}8$ reaches 85.8\% over five seeds and every Sign $G{=}8$ seed exceeds every DrGRPO $G{=}8$ seed; the median gap (84.2\% vs.\ 81.8\%) shows that this conclusion is not an artifact of the seed-42 outlier.

\paragraph{Limitations.}
Our primary 7-seed evidence is GSM8K with Qwen3.5-9B; cross-model and cross-dataset checks use fewer seeds, and the $G{=}4$ degeneracy measurement is from a single seed-42 trajectory.
A matched $G{=}8$ MATH-500 transfer check is not positive for Sign (29.3\% vs.\ 32.3\% DrGRPO over 3 seeds; see appendix), so cross-dataset generality remains open.
The $G{=}8$ control uses a separate larger-group configuration with gradient accumulation and completion-length settings disclosed in the appendix; it is a same-$G$ mechanism test rather than a completion-length-normalized comparison against the $G{=}4$ table.
Sign has high seed variance on Qwen3.5-9B; one possible explanation is that the stronger per-token signal makes LoRA updates more sensitive to initialization, while the stronger Llama base has much lower Sign variance (std $=0.4$).
The weakest model does not improve after 200 steps, and we cannot rule out DrGRPO at $G{=}4$ closing part of the gap with substantially more training.
The theory assumes on-policy, unclipped gradients without KL penalty, while practical training includes clipping and KL regularization.
We do not compare against DAPO, SPO, or RC-GRPO-style alternatives; matching fixed-reference advantage against generation-time filtering and persistent baselines remains important future work.

%% file: sections/6_conclusion.tex
\section{Conclusion}
\label{sec:conclusion}

We identified gradient starvation as a small-group failure mode of binary-reward GRPO: at $G{=}4$, a seed-42 DrGRPO trajectory shows that $69\%$ of logged groups produce zero gradient under group-mean advantage centering, and DrGRPO learns only $+$2.9pp above the base model under the 200-step budget.
The fix is to remove group-relative centering entirely: the trivially simple Sign advantage $A = 2r - 1$ provides full-strength gradient signal in every group and achieves $+$45.4pp over DrGRPO.
The larger-group control sharpens this conclusion rather than weakening it: increasing to $G{=}8$ lets DrGRPO recover to 81.7\% over seven seeds, while matched Sign runs reach 85.8\% over five seeds, consistent with starvation decreasing as $G$ grows but not disappearing as an advantage-design issue.
We proved that Sign's all-fail gradient performs pass@$G$ failure descent, and pass@$k$ analysis shows that the gain is primarily search compression with small capacity expansion.
Cross-model and capability-sweep checks are directionally consistent, with gains largest for moderately weak models above a minimal capability threshold; the MATH-500 evidence remains a single $G{=}4$ transfer check.

In RLVR with binary verifiable rewards, advantage normalization is therefore a critical design choice: small-$G$ training should avoid group-mean centering, increase group size, or use generation-time filtering.
More efficient RLVR can reduce compute waste and improve useful reasoning systems, but it can also strengthen models used in harmful automated reasoning; we view gradient-starvation diagnostics as transparency tools rather than release recommendations.

%% file: sections/A_appendix.tex
\section{Proof of the Pass@$G$ Failure Descent Theorem}
\label{app:proofs}

\begin{proof}
Fix a prompt $x$ with policy success probability $p = \pi_\theta(C_x \mid x)$ and failure probability $q = 1 - p$.
Let $s_\theta(y) = \nabla_\theta \log \pi_\theta(y \mid x)$ denote the score function.

Consider a group of $G$ i.i.d.\ samples $Y_1, \ldots, Y_G \sim \pi_\theta(\cdot \mid x)$, and assume the expected score norm is finite: $\E_{Y\sim\pi_\theta}[\|\nabla_\theta\log\pi_\theta(Y\mid x)\|] < \infty$.
Let $N = \sum_{i=1}^G r(Y_i)$ be the number of correct responses.
Under fixed-reference advantage with constant $c>0$ (i.e., $A_i = -c$ for incorrect responses), the policy gradient loss from the group is:
\[
L_0 = \frac{c}{G} \sum_{i=1}^G \log \pi_\theta(Y_i \mid x).
\]

Conditioned on $N = 0$ (all-fail group), each $Y_i$ is drawn from $\pi_\theta(\cdot \mid Y \in W_x)$ where $W_x = C_x^c$ is the incorrect set.

The conditional expected score is:
\[
\E[s_\theta(Y) \mid Y \in W_x] = \sum_{y \in W_x} \frac{\pi_\theta(y)}{q} \nabla_\theta \log \pi_\theta(y) = \frac{1}{q} \sum_{y \in W_x} \nabla_\theta \pi_\theta(y) = \frac{\nabla_\theta q}{q} = \nabla_\theta \log q.
\]

Therefore:
\[
\E[\nabla_\theta L_0 \mid N = 0] = c \cdot \nabla_\theta \log q = c \cdot \frac{-\nabla_\theta p}{q}.
\]

Weighting by $\Pr(N = 0) = q^G$:
\begin{align}
\E[\nabla_\theta L_0 \cdot \mathbf{1}\{N = 0\}] &= q^G \cdot c \cdot \frac{-\nabla_\theta p}{q} = -c \cdot q^{G-1} \nabla_\theta p.
\end{align}

Since $\nabla_\theta q^G = G q^{G-1} \nabla_\theta q = -G q^{G-1} \nabla_\theta p$, we have:
\[
\E[\nabla_\theta L_0 \cdot \mathbf{1}\{N = 0\}] = \frac{c}{G} \nabla_\theta q^G = -\frac{c}{G} \nabla_\theta \passG.
\]

Gradient descent on $L_0$ therefore performs gradient ascent on $\passG = 1 - q^G$.
For Sign advantage, $c = 1$, giving full-strength descent on the failure probability $q^G$.

Since $\frac{d}{dp}\passk = k(1-p)^{k-1} > 0$ for $0 < p < 1$, increasing $p$ increases $\passk$ for every $k \geq 1$, which proves the pass@$k$ monotonicity corollary.
\end{proof}

\section{Additional Experimental Details}
\label{app:details}

\subsection{Pass@$k$ Details}
\label{app:passk}

\begin{table}[h]
\centering
\caption{Pass@$k$ comparison ($k{=}128$, 200 random GSM8K questions, temperature 1.0, one checkpoint per method). Deltas are computed before rounding displayed values.}
\label{tab:passk}
\begin{tabular}{lccccc}
\toprule
& pass@1 & pass@10 & pass@25 & pass@50 & pass@100 \\
\midrule
Base & 81.4 & 93.9 & 95.7 & 96.5 & 97.2 \\
DrGRPO & 85.3 & 95.9 & 97.6 & 98.5 & 99.0 \\
TASA & 88.6 & 97.9 & 98.7 & 99.0 & 99.4 \\
Sign & \textbf{93.2} & \textbf{98.2} & \textbf{98.7} & \textbf{99.0} & \textbf{99.4} \\
\midrule
$\Delta$ (Sign$-$Base) & $+$11.9 & $+$4.3 & $+$3.0 & $+$2.5 & $+$2.2 \\
\bottomrule
\end{tabular}
\end{table}

\begin{figure}[h]
\centering
\includegraphics[width=.48\textwidth]{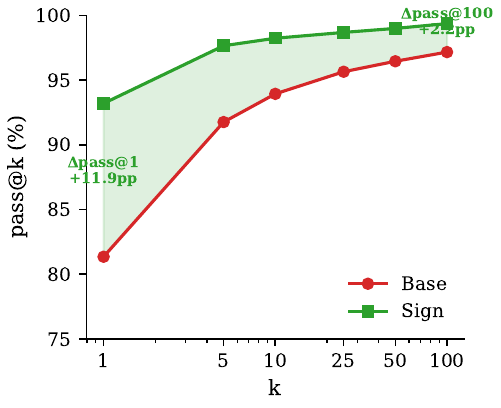}\hfill
\includegraphics[width=.48\textwidth]{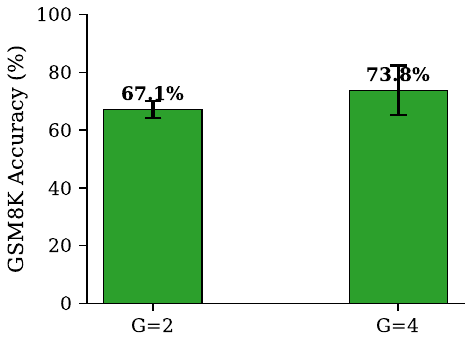}
\caption{Left: Pass@$k$ curves. Right: Sign advantage at $G{=}2$ and $G{=}4$.}
\label{fig:passk}
\end{figure}

\subsection{Training-Dynamics Figure}

\begin{figure}[h]
\centering
\includegraphics[width=\textwidth]{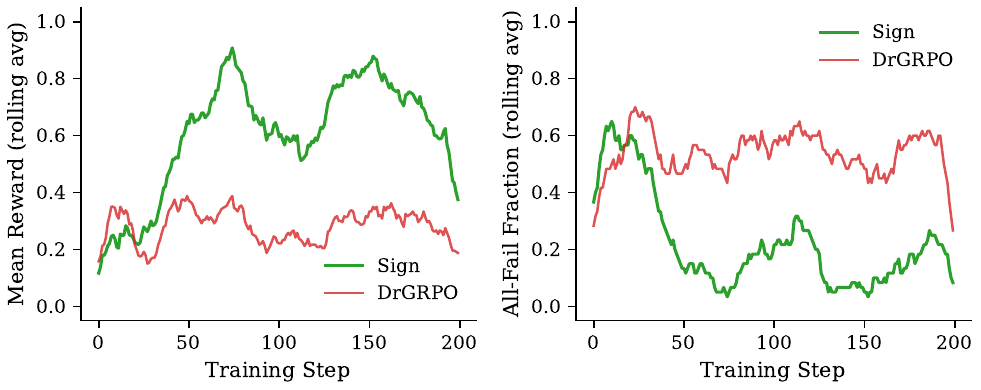}
\caption{\textbf{Left:} Training reward over 200 steps. \textbf{Right:} Fraction of all-fail groups. Sign rapidly escapes gradient starvation, while DrGRPO remains trapped with high all-fail mass.}
\label{fig:training}
\end{figure}

\paragraph{Training configuration.}
The main $G{=}4$ experiments use our TRL-based GRPO trainer~\citep{vonwerra2022trl} with the following shared configuration: LoRA rank $r{=}64$, $\alpha{=}128$, target modules \{q, k, v, o, gate, up, down\}\_proj, dropout 0.05, batch size 1, gradient accumulation 4, warmup ratio 0.05, weight decay 0.01, max gradient norm 1.0, bf16 precision, max completion length 256 tokens.
The only variable across methods is the advantage computation.

\paragraph{Contrastive-pair replay.}
For the ``Contrastive pair'' row, the on-policy advantage is the same threshold-anchored signed advantage as TASA, plus a prompt-local replay loss over archived verified completions.
For a same-prompt positive-negative pair $(y^+,y^-)$, let
\[
m_\theta = [\log\pi_\theta(y^+\mid x)-\log\pi_\theta(y^-\mid x)]-\rho_{\mathrm{ref}}[\log\pi_{\mathrm{ref}}(y^+\mid x)-\log\pi_{\mathrm{ref}}(y^-\mid x)].
\]
The replay term is $\mathcal{L}_{\mathrm{pair}}=\lambda_{\mathrm{pair}}\sum_{(y^+,y^-)} w(y^+,y^-)(-\log\sigma(m_\theta)) / \sum w$, where $w=\mathrm{clip}(\Delta r \cdot f_{\mathrm{frontier}}\cdot d_{\mathrm{age}},0.05,1.0)$.
Here $\Delta r$ is the reward gap, $f_{\mathrm{frontier}}=4\hat{p}_x(1-\hat{p}_x)\min(1,n_x/5)$ uses the prompt Beta-posterior mean $\hat{p}_x$ and observation count $n_x$, and $d_{\mathrm{age}}=\exp[-(a_+ + a_-)/(2\tau)]$ with $\tau=200$ decays stale pairs by their positive/negative ages.

\paragraph{Expected gradient coefficient calculation.}
The coefficients in \Cref{sec:coverage} are computed for the averaged group loss by summing over $N=n$ correct samples.
For Sign, $\E[(1/G)\sum_i A_i s_i]=2\nabla p$.
For TASA at $G{=}4,p{=}0.25,q{=}0.75$, degenerate groups contribute $(q^3+p^3)/4$ and mixed groups contribute $(1-p^4-q^4)(1/p+1/q)/4$, giving $1.016\nabla p$.
For std-normalized DrGRPO, mixed groups use $A^+=(1-n/G)/\hat{\sigma}_n$ and $A^-=-n/(G\hat{\sigma}_n)$ with the unbiased PyTorch group std $\hat{\sigma}_n^2=[n(1-n/G)^2+(G-n)(n/G)^2]/(G-1)$; summing $\binom{4}{n}p^n q^{4-n}(1/4)[nA^+/p+(4-n)(-A^-)/q]$ over $n=1,2,3$ gives $1.425\nabla p$.

\paragraph{Evaluation.}
Greedy decoding on the full GSM8K test set ($n{=}1{,}319$).
Answer extraction uses the \texttt{\#\#\#\#} delimiter with numeric canonicalization (removing trailing zeros and dots).
Pass@$k$ evaluation uses temperature 1.0, top-$p$ 0.95, max 512 new tokens, with the unbiased estimator from~\citet{chen2021codex}.
The MATH-500 transfer check uses the same checkpoint-selection policy but a separate MATH evaluation pipeline, so we report it as a single-seed directional transfer result.
The $G{=}2$ endpoint masses used for the starvation figure are provenance statistics computed from the phase-sweep training-start rollout logs for \texttt{B\_tasa\_g2\_seed42}; they are used only for the Jensen-gap diagnostic, while the main GSM8K claims use archived full-test evaluation JSONs.

\paragraph{Compute.}
All experiments ran on 8$\times$ A800-SXM4-80GB GPUs.
Each 200-step training run takes approximately 3.5 GPU-hours at $G{=}4$.
Pass@$k$ evaluation at $k{=}128$ over 200 questions takes approximately 24 GPU-hours per condition.

\section{Larger-Group Control Details}
\label{app:g8}

\Cref{tab:g8perseed} reports the per-run values for the matched $G{=}8$ control summarized in the main paper.
The DrGRPO $G{=}8$ row is stable across seven seeds, while the Sign $G{=}8$ row has higher variance across five seeds.
The control is matched on model, dataset, group size, step count, and evaluation protocol.
The $G{=}8$ training configuration uses gradient accumulation 8 and a 192-token completion cap to keep the larger group size within memory.
It is therefore a same-$G$ control rather than a completion-length-normalized comparison against the $G{=}4$ main table.

\begin{table}[h]
\centering
\caption{Per-run GSM8K greedy accuracy for the matched $G{=}8$ larger-group control.}
\label{tab:g8perseed}
\begin{tabular}{lcc}
\toprule
Run & Seed & Accuracy (\%) \\
\midrule
DrGRPO $G{=}8$ & 42 & 82.18 \\
DrGRPO $G{=}8$ & 43 & 81.88 \\
DrGRPO $G{=}8$ & 44 & 80.89 \\
DrGRPO $G{=}8$ & 45 & 81.73 \\
DrGRPO $G{=}8$ & 46 & 82.11 \\
DrGRPO $G{=}8$ & 47 & 81.80 \\
DrGRPO $G{=}8$ & 48 & 81.35 \\
Sign $G{=}8$ & 42 & 93.63 \\
Sign $G{=}8$ & 43 & 84.08 \\
Sign $G{=}8$ & 44 & 84.61 \\
Sign $G{=}8$ & 45 & 82.64 \\
Sign $G{=}8$ & 46 & 84.15 \\
\bottomrule
\end{tabular}
\end{table}

\section{Matched $G{=}8$ MATH-500 Transfer Check}
\label{app:mathg8}

We additionally evaluated the matched $G{=}8$ checkpoints on MATH-500.
Unlike the $G{=}4$ single-checkpoint transfer result in the main text, this matched $G{=}8$ transfer check is not positive for Sign: DrGRPO reaches 32.3\% mean accuracy over three seeds, while Sign reaches 29.3\% with high variance.
We therefore do not use it as supporting evidence for cross-dataset generality.

\begin{table}[h]
\centering
\caption{Per-run MATH-500 greedy accuracy for the matched $G{=}8$ transfer check.}
\label{tab:g8math}
\begin{tabular}{lcc}
\toprule
Run & Seed & Accuracy (\%) \\
\midrule
DrGRPO $G{=}8$ & 42 & 32.0 \\
DrGRPO $G{=}8$ & 43 & 32.0 \\
DrGRPO $G{=}8$ & 44 & 32.8 \\
Sign $G{=}8$ & 42 & 43.4 \\
Sign $G{=}8$ & 43 & 21.4 \\
Sign $G{=}8$ & 44 & 23.2 \\
\bottomrule
\end{tabular}
\end{table}